\theoremstyle{plain}
\newtheorem{theorem}{Theorem}[section]
\newtheorem{lemma}[theorem]{Lemma}
\theoremstyle{definition}
\newtheorem{example}[theorem]{Example}
\newtheorem{claim}[theorem]{Claim}
\newtheorem{definition}[theorem]{Definition}
\newtheorem{assumption}[theorem]{Assumption}
\theoremstyle{remark}
\newcommand{\manipconj}[1]{c^*_{conj} \left(#1\right)} 
\newcommand{\manipseq}[1]{c^*_{seq} \left(#1\right)} 
\newcommand{\cost}{c} 
\newcommand{\reals}{\mathbb{R}}
\newcommand{\X}{\mathcal{X}}
\title{Sequential Strategic Screening}
\author{Lee Cohen\thanks{Toyota Technological Institute at Chicago (TTIC). Email: \href{mailto:lee@ttic.edu}{lee@ttic.edu}} \and Saeed Sharifi-Malvajerdi\thanks{Toyota Technological Institute at Chicago (TTIC). Email: \href{mailto:vakilian@ttic.edu}{saeed@ttic.edu}} \and Kevin Stangl\thanks{Toyota Technological Institute at Chicago (TTIC). 
Supported in part by the National Science Foundation under grant CCF-2212968 and by the Simons Foundation under the Simons Collaboration on the Theory of Algorithmic Fairness. Email: \href{mailto:kevin@ttic.edu}{kevin@ttic.edu}} \and Ali Vakilian\thanks{Toyota Technological Institute at Chicago (TTIC). Email: \href{mailto:vakilian@ttic.edu}{vakilian@ttic.edu}} \and Juba Ziani \thanks{Georgia Institute of Technology. Email: \href{mailto:jziani3@gatech.edu}{jziani3@gatech.edu}}}
\date{}
\begin{document}
\maketitle

\begin{abstract}
We initiate the study of strategic behavior in screening processes with \emph{multiple} classifiers. 
We focus on two contrasting settings: a ``conjunctive'' setting in which an individual must satisfy all classifiers simultaneously, and a sequential setting in which an individual to succeed must satisfy classifiers one at a time. In other words, we introduce the combination of \emph{strategic classification} with screening processes.

We show that sequential screening pipelines exhibit new and 
surprising behavior where individuals can exploit the sequential ordering of the tests to ``zig-zag'' between classifiers without having to simultaneously satisfy all of them. We demonstrate an individual can obtain a positive outcome using a limited manipulation budget even when far from the intersection of the positive regions of every classifier. 
Finally, we consider 
a learner whose goal is to design a sequential screening process that is robust to such manipulations, and provide a construction for the learner that optimizes a natural objective.
\end{abstract}
\newpage
\section{Introduction}\label{sec:intro}
Screening processes~\citep{arunachaleswaran2022pipeline, blum2022multi, cohen2019efficient} involve evaluating and selecting individuals for a specific, pre-defined purpose, such as a job, educational program, or loan application. 
These screening processes are generally designed to identify which individuals are qualified for a position or opportunity, often using multiple sequential classifiers or tests. For example, many hiring processes involve multiple rounds of interviews; university admissions can involve a combination of standardized tests, essays, or interviews. They have substantial practical benefits, in that they can allow a complex decision to be broken into a sequence of smaller and cheaper steps; this allows, for example, to split a decision across multiple independent interviewers, or across smaller and easier-to-measure criteria and requirements.  

Many of the decisions made by such screening processes are high stakes. For example, university admissions can affect an individual's prospects for their entire life. Loan decisions can have a long-term (sometimes even inter-generational) effect on a family's wealth or socio-economic status. When these decisions are high stakes, i.e. when obtaining a positive outcome is valuable or potentially life-changing or obtaining a negative outcome can be harmful, individuals may want to manipulate their features to trick the classifier into assigning them a positive outcome. 

In machine learning, this idea is known as strategic classification, and was notably introduced and studied by~\cite{bruckner2011stackelberg,hardt2016strategic}.
The current work aims to incorporate strategic classification within screening processes, taking a departure from the classical point of view in the strategic classification literature that focuses on a single classifier (see related work section). 

The key novel idea of our model of {\em strategic screening processes (or pipelines)}, compared to the strategic classification literature, comes from the fact that i) an individual has to pass and manipulate her way through \emph{several} classifiers, and ii) that we consider \emph{sequential} screening pipelines.

In a sequential screening pipeline, once an individual (also called \emph{Agent}) has passed a test or stage of this pipeline, she can ``forget'' about the said stage; whether or not she passes the next stage depends \textit{only on her performance in that stage}. For example, a job candidate that has passed the initial human resources interview may not need to worry about convincing that interviewer, and can instead expand her effort solely into preparing for the first technical round of interviews.
Alternatively, imagine a student `cramming' for a sequence of final exams, where one has a finite capacity to study that is used up over a week of tests. One wants to achieve a minimum score on each test, with a minimum of effort, by studying in between each test.

Our goal in this work is to examine how considering a pipeline comprised of a sequence of classifiers affects and modifies the way a strategic agent manipulates her features to obtain a positive classification outcome, and how a learner (which we primarily call the \emph{Firm}) should take this strategic behavior into account to design screening pipelines that are robust to such manipulation. 

We make a distinction between the following two cases: 1) the firm deploys its classifiers sequentially which we refer to as a {\em sequential screening process}; 2) the firm deploys a single classifier whose positive classification region is the intersection of the positive regions of the classifiers that form the pipeline which we sometimes refer to as \emph{simultaneous (or conjunctive) testing}---this single classifier is basically the {\em conjunction} or intersection of classifiers from the pipeline. The former corresponds to a natural screening process that is often used in practice and for which we give our main results, while the latter is primarily considered as a benchmark for our results for the sequential case. 

\begin{figure}[t]
\label{fig:zigzagjumps}
\centering
\includegraphics{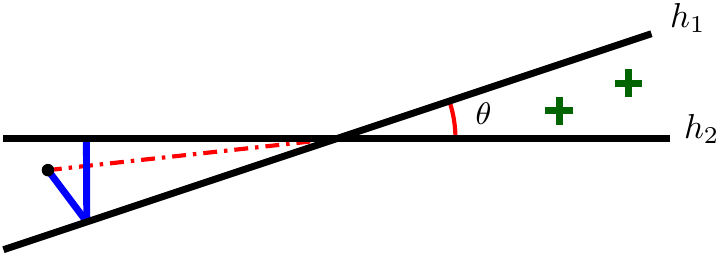}
\caption{Suppose the agent is the disqualified (i.e., placed in the negative region of the conjunctions of $h_1, h_2$) point.
A trivial manipulation strategy is to use the shortest {\em direct} path to the positive region, which is the dashed red path. However, the agent may also first manipulate slightly to pass $h_1$, then manipulate minimally again to pass $h_2$, as depicted with the blue solid path. This is what we call a zig-zag strategy.}
\centering
\end{figure}

\paragraph{Our Contributions.}
We show a perhaps surprising result: an agent can exploit the sequential nature of the screening process and move through the whole pipeline even when she started far from the intersection of the positive classification regions of all classifiers. 
In other words, the sequentiality of screening processes can \emph{improve} an agent's ability to manipulate her way through multiple classifiers compared to the simultaneous screening. We name the resulting set of strategies for such an agent in the sequential case \textit{``Zig-Zag" strategies}. In other words, whenever the agent does not manipulate straight to a point that is classified as positive by the conjunction of all classifiers, we call it a zig-zag strategy. An example of such a strategy that zig-zags between two classifiers is provided in Figure \ref{fig:zigzagjumps}.

In Figure \ref{fig:zigzagjumps}, since there is a small angle $\theta$ between the two tests, an agent at the bottom of the figure can zag right and then left as shown by the blue lines.  In this case, the agent is classified as positive in every single step, and by making $\theta$ arbitrarily small, will have arbitrarily lower total cost (e.g., the cumulative $\ell_2$ distance) compared to going directly to the intersection point of the classifiers. We provide concrete classifiers and an initial feature vector for such a case in Example \ref{exp:zigzag}.

In fact, in Section \ref{sec:example}  we show that for a given point, as $\theta$ goes to zero, the ratio between the total cost of the zig-zag strategy and the cost of 
going directly to the intersection can become arbitrarily large.
As we assume that conjunction of the classifiers captures the objective of the firm, using a pipeline can allow more disqualified people to get a positive outcome by manipulating their features. We show this in Figure~\ref{fig:2d-manipulation}: This figure shows the region of the agents space that can successfully manipulate to pass two linear tests in the two-dimensional setting, given a budget $\tau$ for manipulation. As shown by the figure, individuals in the green region of Figure~\ref{fig:2d-manipulation}.c can pass the tests in the sequential setting but would not be able to do so if they had to pass the tests simultaneously.

We further show how the optimal zig-zag strategy of an agent can be obtained computationally efficiently via a simple convex optimization framework in Section \ref{sec: algo} and provide a closed-form characterization of this strategy in the special case of $2$-dimensional features and a pipeline of exactly two classifiers in Section  \ref{subsec:closed-form}. 

In Section \ref{subsec-monotone} we consider a ``monotonicity" condition under which, agents prefer to use the simple strategy which passes all classifiers simultaneously in a single move and does not zig-zag between classifiers. 

 Finally, in Section~\ref{sec:conservative-defense}, we exhibit a defense strategy that maximizes true positives subject to not allowing any false positives. Interestingly, we show that under this strategy, deploying classifiers sequentially allows for a higher utility for the firm than using a conjunction of classifiers. 
 \begin{figure}[!t]
  \centering
  \subfigure($a$){\includegraphics[width=0.45\textwidth]{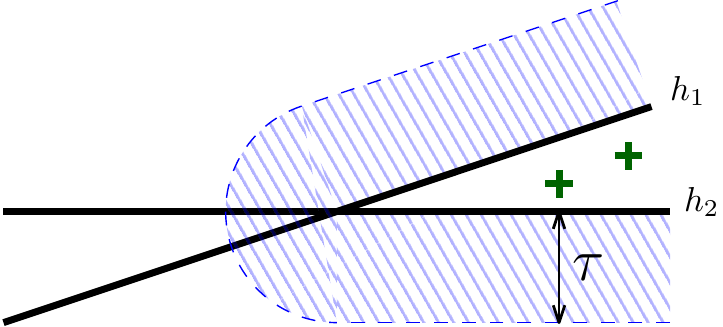}} 
  \subfigure($b$){\includegraphics[width=0.45\textwidth]{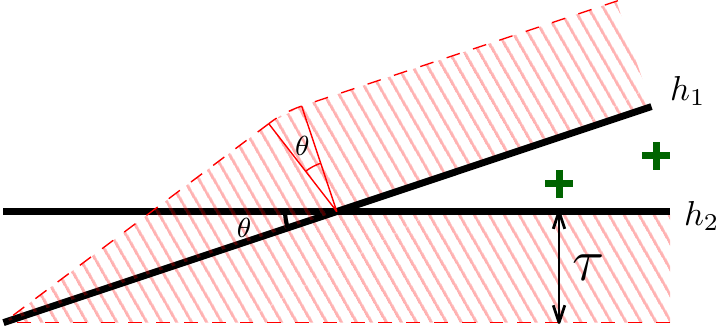}} 
  \subfigure($c$){\includegraphics[width=0.45\textwidth]{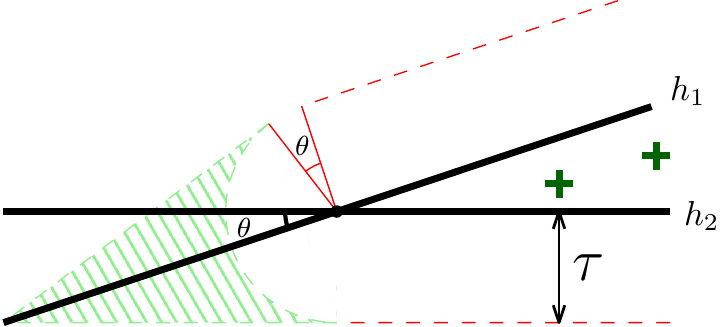}}
  \caption{Each agent has a manipulation budget of $\tau$ and the cost function is $\ell_2$ distance. Then,
  ($a$) shows the region of agents who afford to manipulate their feature vectors to pass both tests simultaneously, ($b$) shows the region of agents who afford to manipulate their feature vectors to pass the tests sequentially (i.e, first $h_1$, then $h_2$), and ($c$) shows the difference in these two regions.}
\label{fig:2d-manipulation}
\end{figure}

\paragraph{Related Work.} Our work inscribes itself at the intersection of two recent lines of work. The first one studies how strategic behavior affects decision-making algorithms (e.g. regression or classification algorithms), and how to design decision rules that take into account or dis-incentivize strategic behavior. This line of work is extensive and comprised of the works of~\citep{bruckner2011stackelberg, hardt2016strategic,kleinberg2020classifiers, randnoisestrat, miller2020strategic,strat1,strat2,strat3,strat4,strat5,strat6,strat8,strat9,strat10,strat11,strat12,strat13,strat14,strat15,strat16,stratperceptron,tang2021linear,hu2019disparate,socialcost18,performative2020,stratindark,randnoisestrat,gameimprove,bechavod2021gaming, bechavod2022information,shavit2020causal,dong2018strategic,chen2020learning,harris2021stateful}.

The second line of work is separate and aims to understand how decisions compose and affect each other in decision-making and screening pipelines~\citep{cohen2019efficient,BowerKNSVV17,blum2022multi,arunachaleswaran2022pipeline, dwork2020individual, faircomp}. These works studies settings in which \emph{multiple} decisions are made about an individual or an applicant.

However, and to the best of our knowledge, there is little work bringing these two fields together and studying strategic behavior in the context of decision \emph{pipelines} comprised of \emph{multiple} classifiers. This is where the contribution of the current work lies. 

\section{Our Model}
Formally, individuals (or agents) are represented by a set of features $x \in \mathcal{X}$, where $\mathcal{X} \subseteq \mathbb{R}^d$, for $d \ge 1$. 
The firm has a fixed sequence of binary tests or classifiers $h_1, h_2, \dots, h_k:\mathcal{X} \rightarrow \{ 0, 1 \}$ that are deployed to select qualified individuals while screening out unqualified individuals. Here, an outcome of $1$ (positive) corresponds to an acceptance, and an outcome of $0$ (negative) corresponds to a rejection. Once a person is rejected by a test they leave the pipeline. 

In the whole paper, we assume that the classifiers are linear and defined by half-spaces; i.e. $h_{i}(x) = 1 \iff w^{\top}_{i} x \ge b_i $ for some vector $w_i \in \reals^d$ and real threshold $b_i \in \reals$. Equivalently, we often write $h_i(x) = \mathbbm{1} \left[w^{\top}_{i} x \ge b_i\right]$.\footnote{While more general classes of classifiers could be considered, linear classifiers are a natural starting point to study strategic classification. This linearity assumption arises in previous work, e.g.~\citep{kleinberg2020classifiers,tang2021linear,gameimprove} to only name a few.} 

In this work we assume that the true qualifications of individuals are determined by the conjunction of the classifiers adopted by the firm in the pipeline, i.e. an agent $x$ is qualified if and only if $h_i (x) = 1$ for all $i$. In other words, the firm has designed a pipeline that makes no error in predicting individuals' qualifications \emph{absent strategic behavior}.

However, in the presence of strategic behavior, individuals try to manipulate their feature vectors to become positively classified by the classifiers simply because they receive a positive utility from  a positive outcome. Similar to prior works, throughout this work, we assume a ``white box" model meaning agents know the parameters for each classifier. More precisely, the firm commits to using a sequential screening process consisting of classifiers $h_1, h_2 \ldots h_k$, and each agent knows the parameters of each hypothesis, the order of the tests, her own feature value $x$, and the cost to manipulate to any other point in the input space.

An agent's cost function is modeled by a function $c: \X \times \X \to \mathbb{R}_{\ge 0}$ that takes two points $x, \hat{x}$ and outputs the cost of moving from $x$ to $\hat{x}$. One can think of $x$ as the initial feature vector of an agent and $\hat{x}$ as the manipulated features. In the sequential setting that we consider, we take the cost of manipulation to be the cumulative cost across every single manipulation. In particular, for a manipulation path $x^{(0)} \to x^{(1)} \to x^{(2)} \to \ldots \to x^{(k)}$ taken by an agent whose true feature values are $x^{(0)}$, the cost of manipulation is given by $\sum_{i=1}^k c(x^{(i-1)}, x^{(i)})$. We assume such manipulations do not change nor improve one's true qualifications\footnote{E.g., in a loan application, such manipulations could be opening a new credit card account: doing so may temporarily increase an agent's credit score, but does not change anything about an agent's intrinsic financial responsibility and ability to repay the loan.} and we discuss how the firm mitigates this effect of manipulation. 

In turn, the firm's goal is to have an accurate screening process whose predictions are as robust to and unaffected by such strategic: the firm modifies its classifiers $h_1, \cdots, h_k$ to $\Tilde{h}_1, \cdots, \Tilde{h}_k$ so that the output of $\Tilde{h}_1, \cdots, \Tilde{h}_k$ on manipulated agents' features can identify the qualified agents optimally with respect to a given ``accuracy measure"; we will consider two such measures in Section~\ref{sec:defense}.   

\subsection{Agent's Manipulation}
We proceed by formally defining the minimal cost of manipulation, which is the minimal cost an agent has to invest to pass all classifiers, and the best response of an agent for both sequential and simultaneous testing. 
\begin{definition}[Manipulation Cost: Sequential]
    Given a sequence of classifiers $h_1, \ldots, h_k$, a global cost function $c$, and an agent $x^{(0)} \in \mathcal{X}$, the manipulation cost of an agent in the sequential setting is defined as the minimum cost incurred by her to pass all the classifiers sequentially, i.e.,
    \begin{align*}
    \manipseq{x^{(0)},\{h_1,\ldots,h_k\}}
    = \min_{x^{(1)}, \ldots, x^{(k)} \in \mathcal{X}}~~~&\sum_{i=0}^{k-1} c(x^{(i)}, x^{(i+1)}) 
    \\\text{s.t.}~~~~~~~~~&h_i(x^{(i)}) = 1~~\forall i \in [k].
    \end{align*}
    The \textit{best response} of $x^{(0)}$ to the sequential testing $h_1,\ldots,h_k$ is the path $x^{(1)},\ldots,x^{(k)}$ that minimizes the objective.
\end{definition}

\begin{definition}[Manipulation Cost: Conjunction or Simultaneous]
    Given a set of classifiers $\{h_1, \ldots, h_k\}$, a global cost function $c$, and an agent $x$, the manipulation cost of an agent in the conjunction setting is defined as the minimum cost incurred by her to pass all the classifiers at the same time, i.e.,
    \begin{align*}
    \manipconj{x,\{h_1,\ldots,h_k\}} = \min_{z \in \mathcal{X}}&~~~c(x, z)
    \\\text{s.t.}&~~~h_i(z)=1 \ \forall i \in [k].
    \end{align*}
    
    The \textit{best response} of $x$ to the conjunction of $h_1\ldots,h_k$ is the $z$ that minimizes the objective.
\end{definition}

\section{Best Response of Agents in a Screening Process with Oblivious Defender}\label{sec:bestRes}
In this section, we study the manipulation strategy of an agent. In particular, we present algorithms to compute optimal manipulation strategies efficiently.
We make the following assumption on the cost function in most of the section, unless explicitly noted otherwise:

\begin{assumption}\label{asst: l2_cost}
The cost of moving from $x$ to $\hat{x}$ is given by $c(x,\hat{x}) = \Vert \hat{x} - x \Vert_2$, where $\Vert . \Vert_2$ denotes the standard Euclidean norm.
\end{assumption}
\subsection{Optimal Strategies in the Conjunction Case}
\label{sec: optstragcojn}

As a warm-up to our zig-zag strategy in Section \ref{sec: algo}, we first consider the optimal strategy for our benchmark, which is the case of the simultaneous conjunction of $k$ classifiers.
In the case where agents are supposed to pass a collection of linear classifiers simultaneously, the best response of an agent $x \in \mathbbm{R}^d$ is given by solving the following optimization problem
\begin{align}
\begin{split}\label{eq: convex_optim}
\min_{z}~& c(x,z)\\
\text{s.t.}~~~&w_i^\top z \geq b_i \ \forall i \in [k].
\end{split}
\end{align}
which is a convex program as long as $c$ is convex in $z$. 

In the special case in which $d = 2$ and $k = 2$, i.e. when feature vectors are two-dimensional and an agent must be positively classified by the conjunction of two linear classifiers $h_1 (x) = \mathbbm{1} (w_1^\top x \ge b_1 )$ and $h_2 (x) = \mathbbm{1} (w_2^\top x \ge b_2 )$, we provide a closed form characterization of an agent's strategy. 

We assume that the two classifiers are \emph{not} parallel to each other because if $w_2 = kw_1$ for some $k \in \mathbb{R}$, then one can show that either the acceptance regions of $h_1$ and $h_2$ do not overlap, or the optimal strategy of an agent is simply the orthogonal projection onto the intersection of the acceptance regions of $h_1$ and $h_2$.

We further assume, without loss of generality, that $b_1 = b_2 = 0$ because if either $b_1$ or $b_2$ is nonzero, one can use the change of variables $x' \triangleq x + s$ to write the classifiers as $h_1 (x') = \mathbbm{1} (w_1^\top x' \ge 0)$ and $h_2 (x') = \mathbbm{1} (w_2^\top x' \ge 0)$. Here $s$ is the solution to $\{w_1^\top s = -b_1, w_2^\top s = -b_2\}$.

For any $w \in \mathbb{R}^2$ with $\Vert w \Vert_2 = 1$, let $P_w (x)$ and $d_w (x)$ be the orthogonal projection of $x$ onto the region $\{y \in \mathbb{R}^2 : w^\top y \ge 0 \}$, and its orthogonal distance to the same region, respectively. We have
\begin{align*}
& P_w (x) \triangleq
\begin{cases}
    x & \text{if }w^\top x \ge 0 \\ x - (w^\top x) w & \text{if }w^\top x < 0
\end{cases}, \\
&d_w (x) \triangleq
\begin{cases}
    0 & \text{if }w^\top x \ge 0 \\ |w^\top x| & \text{if }w^\top x < 0
\end{cases}.
\end{align*}
Given this setup, the best response characterization of an agent $x$ can be given as follows. If $h_1 (x) = h_2 (x) = 1$ then $z=x$. Otherwise, the best response is either the orthogonal projection onto the acceptance region of $h_1$ or $h_2$, or moving directly to the intersection of the classifiers ($\Vec{0}$):
\begin{enumerate}
    \item If $h_1 (P_{w_2} (x)) = 1$, then $z = P_{w_2} (x)$ and the cost of manipulation is $\manipconj{ x^{(0)} ,\{h_1,h_2\}} = d_{w_2} (x)$.
    \item If $h_2 (P_{w_1} (x)) = 1$, then $z = P_{w_1} (x)$ and the cost of manipulation is $\manipconj{ x^{(0)} ,\{h_1,h_2\}} = d_{w_1} (x)$.
    \item if $h_1 (P_{w_2} (x)) = h_2 (P_{w_1} (x)) = 0$ then $z = \Vec{0}$ and the cost of manipulation is $\manipconj{ x^{(0)} ,\{h_1,h_2\}} = \Vert x\Vert_2$.
\end{enumerate}

Given a budget $\tau$, agents who can manipulate with a cost of at most $\tau$ to pass the two tests simultaneously, i.e. $\{x^{(0)}: \manipconj{ x^{(0)} ,\{h_1,h_2\}} \le \tau \} $ is highlighted in Figure~\ref{fig:2d-manipulation}.a.

\subsection{A Zig-Zag Manipulation on Sequential Classification Pipelines}\label{sec:example}
Here, we make the observation that the sequential nature of the problem can change how an agent will modify her features in order to pass a collection of classifiers, compared to the case when said classifiers are deployed simultaneously. We illustrate this potentially counter-intuitive observation via the following simple example: 
\begin{example}\label{exp:zigzag}
Consider a two-dimensional setting. Suppose an agent going up for classification has an initial feature vector $x = (0,0)$. Suppose the cost an agent faces to change her features from $x$ to a new vector $\hat{x}$ is given by $\Vert \hat{x} - x\Vert_2$. Further, imagine an agent must pass two classifiers: $h_1(x) = \mathbbm{1}\left\{4 x_2 - 3 x_1 \geq 1\right\}$, and $h_2(x) = \mathbbm{1} \left\{x_1 \geq 1\right\}$, where $x_i$ is the $i-$th component of $x$.

It is not hard to see, by triangle inequality, that if an agent is facing a conjunction of $h_1$ and $h_2$, an agent's cost is minimized when $\hat{x} = (1,1)$ (this is in fact the intersection of the decision boundaries of $h_1$ and $h_2$), in which case the cost incurred by an agent is $\sqrt{1 + 1} = \sqrt{2}$ (see the red manipulation in Figure~\ref{fig:example}). 

\begin{figure}
    \centering
    \includegraphics{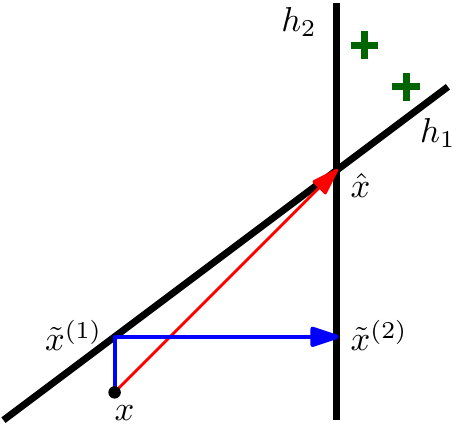}
    \caption{An example for a zig-zag strategy being better for an agent that starts at $x$ in the sequential case than moving in a single step. Here, an agent would prefer to first manipulate to $\tilde{x}^{(1)}$ then to $\tilde{x}^{(2)}$ (the blue arrows) instead of straightforwardly moving from $x$ to $\hat x$ as would be optimal in the conjunction case (the red arrow). 
    }
    \label{fig:example}
\end{figure}

However, if the classifiers are offered sequentially, i.e. $h_1$ then $h_2$, consider the following feature manipulation: first, the agent sets $\tilde{x}^{(1)} = (0,1/4)$, in which case she passes $h_1$ and incurs a cost of $1/4$. Then, the agent sets $\tilde{x}^{(2)} = (1,1/4)$; the cost to go from $\tilde{x}^{(1)}$ to $\tilde{x}^{(2)}$ is $\Vert_2 (1,1/4) - (0,1/4) \Vert = 1$ (see the blue manipulation in Figure~\ref{fig:example}). In turn, the total cost of this manipulation to pass (i.e., get a positive classification on) both classifiers is at most $1 + 1/4 = 5/4$, and is always better than the $\sqrt{2}$ cost for the conjunction of classifiers! 
\end{example}

Intuitively, here, the main idea is that in the ``conjunction of classifiers'' case, an agent must manipulate her features a single time in a way that satisfies all classifiers at once. However, when facing a sequence of classifiers $h_1, \ldots, h_k$, once an agent has passed classifier $h_{i-1}$ for any given $i$, it can ``forget'' classifier $h_{i-1}$ and manipulate its features to pass $h_i$ while \emph{not being required to pass $h_{i-1}$} anymore. In turn, the potential manipulations for an agent in the sequential case are less constrained than in the conjunction of classifiers case. This result is formalized below: 

\begin{restatable}{claim}{claimConjUpBoundsSeq}\label{claim:conjBoundSeq}
Let $h_1, \ldots, h_k$ be a sequence of $k$ linear classifiers. For any agent with initial feature vector $x \in \mathbb{R}^d$ ($d\ge 1$), 
\[
\manipconj{x,\{h_1,\ldots,h_k\}} \geq \manipseq{x,\{h_1,\ldots,h_k\}}.
\]
\end{restatable}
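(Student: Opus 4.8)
The plan is to show that every strategy feasible for the conjunction problem induces a strategy feasible for the sequential problem with the \emph{same} cost; since $\manipseq{x,\{h_1,\ldots,h_k\}}$ is a minimum over a collection of strategies that contains (a copy of) every conjunction strategy, the desired inequality follows immediately.

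First I would dispose of the degenerate case: if $\bigcap_{i=1}^{k}\{z : w_i^\top z \ge b_i\} = \emptyset$, then the conjunction program is infeasible, so $\manipconj{x,\{h_1,\ldots,h_k\}} = +\infty$ and the inequality holds vacuously. Otherwise the feasible set of the conjunction program is a nonempty closed polyhedron and $z \mapsto c(x,z) = \|z-x\|_2$ is continuous and coercive, so the program attains its optimum at some $z^\star$ with $h_i(z^\star) = 1$ for all $i \in [k]$ and $c(x,z^\star) = \manipconj{x,\{h_1,\ldots,h_k\}}$. (Alternatively, one can sidestep attainment by taking an $\varepsilon$-optimal point $z_\varepsilon$ and letting $\varepsilon \downarrow 0$ at the very end.)

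Next I would consider the ``lazy'' sequential path that jumps straight to $z^\star$ and then stays there: set $x^{(0)} = x$ and $x^{(1)} = x^{(2)} = \cdots = x^{(k)} = z^\star$. This path is feasible for the sequential program because $h_i(x^{(i)}) = h_i(z^\star) = 1$ for every $i \in [k]$, and its total cost is $\sum_{i=0}^{k-1} c(x^{(i)}, x^{(i+1)}) = c(x,z^\star) + \sum_{i=1}^{k-1} c(z^\star, z^\star) = c(x,z^\star)$, where we used $c(z,z) = \|z - z\|_2 = 0$. Since $\manipseq{x,\{h_1,\ldots,h_k\}}$ is the minimum cost over all feasible sequential paths, it is at most $c(x,z^\star) = \manipconj{x,\{h_1,\ldots,h_k\}}$, which is exactly the claim.

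There is no real obstacle here: the argument only uses that revisiting a point is free ($c(z,z) = 0$) and that a conjunction-feasible point stays feasible at every stage of the pipeline. The only mildly non-formulaic point is justifying that the conjunction optimum is attained, which the closedness/coercivity remark (or the $\varepsilon$-argument) settles; and in fact the proof goes through verbatim for any cost function $c$ with $c(z,z) = 0$, not merely the $\ell_2$ cost of Assumption~\ref{asst: l2_cost}.
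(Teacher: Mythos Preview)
Your proof is correct and follows essentially the same approach as the paper: take a conjunction-feasible point $z^\star$, set $x^{(1)}=\cdots=x^{(k)}=z^\star$, and observe that this sequential path is feasible with total cost $c(x,z^\star)$. You are a bit more careful than the paper in handling the degenerate empty-intersection case and in justifying attainment of the conjunction optimum, but the core argument is identical.
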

\begin{proof}
Let $\cost$ be the agent's cost function. Let $\hat{x}$ be a vector such that $h_i(\hat{x}) = 1$ for all $i \in [k]$, and such that $\cost(x,\hat{x}) \leq \tau$ where $\tau$ is the manipulation budget available to the agent. Since $\hat{x}$ satisfies $h_i(\hat{x}) = 1$ for all $i \in [k]$, the feature modification $x \to \hat{x}$  gives a positive classification outcome to the agent in the sequential case. Further, the cost of this manipulation is $\cost(x,\hat{x}) + 0 + \ldots + 0 = \cost(x,\hat{x})$. In turn, for any feasible one-shot manipulation that passes all classifiers in the conjunctive case, there exists a feasible sequential manipulation that passes all classifiers in the sequential case which could be of a lower cost; this concludes the proof. 
\end{proof}
Intuitively, the above claim follows from the observation that any best response solution to the conjunction case in particular still passes all classifiers and has the same cost in the sequential case.

However, there can be a significant gap between how much budget an agent needs to spend in the conjunctive versus in the sequential case to successfully pass all classifiers (for illustration, see Figure~\ref{fig:2d-manipulation}). In fact, we show below that the multiplicative gap between the conjunctive and sequential manipulation cost can be unbounded, even in the two-dimensional setting:
\begin{lemma}
\label{lem:costgap}
Consider $d = 2$. For any constant $M>0$, there exists two linear classifiers $h_1$ and $h_2$ and an initial feature vector $x^{(0)}$ such that 
\[
\frac{\manipconj{x^{(0)},\{h_1,h_2\}} }{\manipseq{x^{(0)},\{h_1,h_2\}}} \geq M.
\]
\end{lemma}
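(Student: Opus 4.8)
The plan is to push the phenomenon of Example~\ref{exp:zigzag} to its extreme: I would let the angle between the two decision boundaries shrink to $0$, which in coordinates means keeping one boundary vertical while letting the slope of the other tend to infinity. Concretely, given $M>0$, I would take the disqualified agent $x^{(0)} = (0,0)$ and the classifiers
\[
h_1(x) = \mathbbm{1}\left[x_2 - m\,x_1 \ge 1\right], \qquad h_2(x) = \mathbbm{1}\left[x_1 \ge 1\right],
\]
where $m>0$ is a large parameter to be fixed at the end. (This is in the spirit of Example~\ref{exp:zigzag}, with the slope of $h_1$'s boundary promoted to a free parameter.) Note $h_1(x^{(0)}) = h_2(x^{(0)}) = 0$, so the agent genuinely needs to manipulate.

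\emph{Upper bound on the sequential cost.} I would not try to compute the sequential optimum; it suffices to exhibit one feasible zig-zag path. Take $x^{(0)} = (0,0) \to x^{(1)} = (0,1) \to x^{(2)} = (1,1)$. One checks $h_1(x^{(1)}) = \mathbbm{1}[1 - 0 \ge 1] = 1$ and $h_2(x^{(2)}) = \mathbbm{1}[1 \ge 1] = 1$, so the path is feasible for the sequential program, and its $\ell_2$ cost is $\Vert (0,1) \Vert_2 + \Vert (1,0) \Vert_2 = 2$. Hence $\manipseq{x^{(0)},\{h_1,h_2\}} \le 2$, uniformly in $m$.

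\emph{Lower bound on the conjunction cost.} Here I would argue directly from feasibility rather than invoking the closed form of Section~\ref{sec: optstragcojn}. Any $z = (z_1,z_2)$ with $h_1(z) = h_2(z) = 1$ satisfies $z_1 \ge 1$ and $z_2 \ge m z_1 + 1 \ge m+1$ (using $m>0$), so $\Vert z \Vert_2 \ge |z_2| \ge m+1$. Therefore $\manipconj{x^{(0)},\{h_1,h_2\}} \ge m+1$. Combining the two bounds, $\manipconj{x^{(0)},\{h_1,h_2\}} / \manipseq{x^{(0)},\{h_1,h_2\}} \ge (m+1)/2$, and choosing any $m \ge 2M$ yields a ratio strictly larger than $M$, completing the argument.

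I do not anticipate a real obstacle, since the statement is witnessed by an explicit instance; the only point requiring a little care is ensuring the conjunction's nearest feasible point is genuinely far from $x^{(0)}$ — i.e., that the agent cannot get away by merely projecting onto one of the two half-spaces. In this construction that is immediate: the constraint $z_1 \ge 1$ together with the steep slope of $h_1$'s boundary forces the \emph{entire} feasible region up to height at least $m+1$, whereas the zig-zag path decouples the ``move right'' and ``move up'' steps and so pays only $O(1)$ in total. Geometrically this is precisely the $\theta \to 0$ regime described after Figure~\ref{fig:zigzagjumps}: as $m \to \infty$ the boundary of $h_1$ becomes nearly parallel to the vertical boundary of $h_2$, while the intersection point of the two boundaries recedes to infinity.
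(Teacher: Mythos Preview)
Your proof is correct and follows essentially the same approach as the paper: build an explicit one-parameter family of two-classifier instances from the origin, upper-bound the sequential cost by a fixed constant via an explicit zig-zag path, and lower-bound the conjunction cost directly from the feasibility constraints so that it grows linearly in the parameter. The paper's instance is cosmetically different (two symmetric nearly-vertical half-planes $\tfrac{x_1}{\gamma}\pm x_2\ge 1$ with zig-zag $(0,0)\to(0,1)\to(0,-1)$ of cost $3$ and conjunction cost $\ge\gamma$), but the structure of the argument is identical to yours.
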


\begin{proof}
Pick $x^{(0)} = (0,0)$. Let $\gamma > 0$ be a real number. Consider $h_1(x) = \mathbbm{1} \left\{\frac{x_1}{\gamma} + x_2 \geq 1 \right\}$ and $h_2(x) = \mathbbm{1} \left\{\frac{x_1}{\gamma} - x_2 \geq 1 \right\}$. Let $\hat{x}$ be the agent's features after manipulation. To obtain a positive classification outcome, the agent requires both  $\hat{x}_1 \geq \gamma (1 - \hat{x}_2)$ and $\hat{x}_1 \geq \gamma (1 + \hat{x}_2)$. Since one of $1 - \hat{x}_2$ or $1 + \hat{x}_2$ has to be at least $1$, this implies $\hat{x}_1 \geq \gamma$. In turn, $c(x,\{h_1,h_2\}) = \Vert \hat{x}\Vert \geq \gamma$. 

However, in the sequential case, a manipulation that passes $h_1$ is to set $x^{(1)} = (0,1)$. Then a manipulation that passes $h_2$, starting from $x^{(1)}$, is to set $x^{(2)} = (0,-1)$. The total cost is $\Vert (0,1) - (0,0) \Vert + \Vert (0,-1) - (0, 1)\Vert = 1 + 2 =3$. In particular, 
\[
\frac{\manipconj{x,\{h_1,\ldots,h_k\}} }{\manipseq{x,\{h_1,\ldots,h_k\}}} \geq \gamma/3.
\]
The result is obtained by setting $\gamma = 3M$.
\end{proof}

\subsection{An Algorithmic Characterization of an agent's Optimal Strategy in the Sequential Case}\label{sec: algo}
In this section, we show that in the sequential setting, an agent can compute her optimal sequences of manipulations efficiently. Consider any initial feature vector $x^{(0)} \in \mathbb{R}^d$ for an agent. Further, suppose an agent must pass $k$ linear classifiers $h_1,\ldots,h_k$. For $i \in [k]$, we write once again $h_i (x) = \mathbbm{1}[w_i^\top x\geq b_i]$ the $i$-th classifier that an agent must get a positive classification on. Here and for this subsection only, we relax our assumption on the cost function to be more general, and not limited to $\ell_2$ costs:

\begin{assumption}\label{convex_cost}
The cost $c(x,\hat{x})$ of moving from feature vector $x$ to feature vector $\hat{x}$ is convex in $(x,\hat{x})$.
\end{assumption}

This is a relatively straightforward and mild assumption; absent convexity, computing the best feature modifications for even a single step can be a computationally intractable problem. The assumption covers but is not limited to a large class of cost functions of the form $c(x,\hat{x}) = \Vert \hat{x} - x \Vert$, for \emph{any} norm $\Vert . \Vert$. It can also encode cost functions where different features or directions have different costs of manipulation; an example is $c(x,\hat{x}) = \left(\hat{x} - x \right)^\top A \left(\hat{x} - x \right)$ where $A$ is a positive definite matrix, as used in~\citep{ shavit2020causal,bechavod2022information}.

In this case, an agent's goal, starting from her initial feature vector $x^{(0)}$, is to find a sequence of feature modifications $x^{(1)}$ to $x^{(k)}$ such that: 1) for all $i \in [k]$, $h_i(x^{(i)}) = 1$. I.e., $x^i$ passes the $i$-th classifier; and 2) the total cost $\sum_{i=1}^k c(x^{(i-1)},x^{(i)})$ of going from $x^{(0)} \to x^{(1)} \to x^{(2)} \to \ldots \to x^{(k)}$ is minimized. This can be written as the following optimization problem: 
\begin{align}
\begin{split}\label{eq: convex_optim2}
\min_{x^{(1)},\ldots,x^{(k)}}~&\sum_{i=1}^k c(x^{(i-1)},x^{(i)})\\
\text{s.t.}~~~&w_i^\top x^{(i)} \geq b_i \ \forall i \in [k].
\end{split}
\end{align}

\begin{claim}
Program~\eqref{eq: convex_optim2} is convex in $(x^{(1)},\ldots,x^{(k)})$.
\end{claim}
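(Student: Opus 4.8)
The plan is to show convexity of Program~\eqref{eq: convex_optim2} by checking that both the objective and the feasible region are convex, which is essentially immediate from Assumption~\ref{convex_cost}. The statement to prove is simply:

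\begin{proof}
First I would observe that the feasible set is convex: for each $i \in [k]$, the constraint $w_i^\top x^{(i)} \geq b_i$ is a linear (hence affine) inequality in the tuple $(x^{(1)},\ldots,x^{(k)})$, since it involves only the block $x^{(i)}$ and does so linearly. The intersection of finitely many halfspaces in $\reals^{kd}$ is convex, so the feasible region is convex.

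Next I would handle the objective $\sum_{i=1}^k c(x^{(i-1)},x^{(i)})$, where $x^{(0)}$ is a fixed constant. The key point is that each summand $c(x^{(i-1)},x^{(i)})$ is a convex function of the full tuple $(x^{(1)},\ldots,x^{(k)})$: by Assumption~\ref{convex_cost}, $c(\cdot,\cdot)$ is convex in its pair of arguments, and the map $(x^{(1)},\ldots,x^{(k)}) \mapsto (x^{(i-1)},x^{(i)})$ is affine (it is a coordinate projection, with $x^{(0)}$ treated as a fixed affine shift for the $i=1$ term). The composition of a convex function with an affine map is convex, so each summand is convex in $(x^{(1)},\ldots,x^{(k)})$. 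Then I would note that a finite sum of convex functions is convex, so the objective is convex.

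Finally, minimizing a convex objective over a convex feasible set is a convex program, which completes the argument. There is no real obstacle here; the only point worth stating carefully is that $x^{(0)}$ is a constant and that each $c(x^{(i-1)},x^{(i)})$ depends on the decision variables through an affine map, so that convexity in $(x,\hat{x})$ from Assumption~\ref{convex_cost} transfers to joint convexity in the decision tuple.
\end{proof}
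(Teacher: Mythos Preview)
Your proof is correct and is exactly the standard argument one would give: convexity of the feasible set follows from the affine constraints, and convexity of the objective follows from Assumption~\ref{convex_cost} together with the fact that each summand depends on the decision tuple through an affine projection. The paper in fact states this claim without proof, so your argument fills the gap in precisely the expected way.
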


In turn, we can solve the problem faced by an agent's computationally efficiently, through standard convex optimization techniques. 

\subsection{A Closed-Form Characterization in the 2-Classifier, 2-Dimensional Case}
\label{subsec:closed-form}
We now provide closed-form characterization of an agent's best response in the sequential case, under the two-dimensional two-classifier ($d=k=2$) setting that we considered in Section~\ref{sec: optstragcojn}. Here, we take the cost function to be the standard Euclidean norm, i.e. $c(x,\hat{x}) = \Vert \hat{x} - x \Vert_2$, as per Assumption~\ref{asst: l2_cost}.

\begin{restatable}{theorem}{twoDcharacthm}\label{thm: 2d_charac}
Consider two linear classifiers $h_1 (x) = \mathbbm{1} (w_1^\top x \ge 0)$ and $h_2 (x) = \mathbbm{1} (w_2^\top x \ge 0)$ where $\Vert w_i \Vert_2 = 1$ for $i \in \{1,2\}$ and an agent $x^{(0)} \in \mathbb{R}^2$ such that $h_1 (x^{(0)}) = 0$ and $h_2 (P_{w_1} (x^{(0)})) = 0$. Let $0 < \theta < \pi$ be the angle between (the positive region of) the two linear classifiers; i.e. $\theta$ is the solution to $\cos \theta = - w_1^\top w_2 $. Then:
\begin{enumerate}
\item If $|\tan \theta| > \Vert P_{w_1} (x^{(0)}) \Vert_2 / d_{w_1} (x^{(0)})$, then the best response for an agent is to pick 
\[
x^{(2)} = x^{(1)} = \Vec{0}.
\]
In this case, the cost of manipulation is 
\[
\manipseq{ x^{(0)} ,\{h_1,h_2\}} = \Vert x^{(0)} \Vert_2. 
\]
\item If $|\tan \theta| \le \Vert P_{w_1} (x^{(0)}) \Vert_2 / d_{w_1} (x^{(0)})$, then the best response is given by
\[
x^{(1)}  = \left( 1 - \frac{d_{w_1} (x^{(0)})}{\Vert P_{w_1} (x^{(0)}) \Vert_2} |\tan \theta| \right) P_{w_1} (x^{(0)})
\] 
and $x^{(2)} = P_{w_2} (x^{(1)})$, and the cost of manipulation is given by 
\begin{align*}
&\manipseq{ x^{(0)} ,\{h_1,h_2\}} 
= d_{w_1} (x^{(0)}) | \cos \theta | + \Vert P_{w_1} (x^{(0)})\Vert_2  \sin \theta.
\end{align*}
\end{enumerate}
\end{restatable}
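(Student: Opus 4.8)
The plan is to set up the problem as a one-variable optimization by exploiting the structure of the sequential best response. Since $h_1(x^{(0)}) = 0$, the agent must move $x^{(1)}$ onto the boundary or interior of $h_1$'s positive region; and since $h_2(P_{w_1}(x^{(0)})) = 0$, simply projecting onto $h_1$ does not suffice to clear $h_2$, so the agent genuinely needs two moves (or one move all the way to $\vec{0}$). The first observation I would establish is that it is without loss of generality to take $x^{(1)}$ on the decision boundary $\{w_1^\top y = 0\}$: moving $x^{(1)}$ strictly into the interior of $h_1$'s halfspace only increases $c(x^{(0)}, x^{(1)})$ without relaxing the constraint $h_1(x^{(1)})=1$, and the subsequent leg $c(x^{(1)},x^{(2)})$ can only benefit from $x^{(1)}$ being closer to the $h_2$ region, which (given the geometry) means staying on the boundary of $h_1$. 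Second, given $x^{(1)}$ on that line, the optimal $x^{(2)}$ is exactly $P_{w_2}(x^{(1)})$ by the single-classifier projection fact, with cost $d_{w_2}(x^{(1)})$. So the total cost is $\|x^{(1)} - x^{(0)}\| + d_{w_2}(x^{(1)})$, a convex function of the single scalar parametrizing $x^{(1)}$ along the line $\{w_1^\top y = 0\}$.

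Next I would choose coordinates to make the scalar optimization transparent: rotate/reflect so that the line $\{w_1^\top y = 0\}$ is an axis and $P_{w_1}(x^{(0)})$ sits at a known point on it, at signed distance $d_{w_1}(x^{(0)})$ from $x^{(0)}$. Parametrize $x^{(1)} = P_{w_1}(x^{(0)}) + t\,u$ where $u$ is the unit vector along the boundary line pointing "toward" the $h_2$ region (i.e., toward $\vec 0$, the intersection of the two boundaries, which lies on this line). Then $\|x^{(1)}-x^{(0)}\|^2 = d_{w_1}(x^{(0)})^2 + t^2$, and $d_{w_2}(x^{(1)})$ is an affine function of $t$ on the relevant range — its slope is governed by the sine of the angle between the $h_1$-boundary line and the $h_2$-boundary, which is $\sin\theta$, and it hits zero precisely when $x^{(1)} = \vec 0$, i.e. at $t = \|P_{w_1}(x^{(0)})\|_2$. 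Writing $g(t) = \sqrt{d_{w_1}(x^{(0)})^2 + t^2} + \big(\|P_{w_1}(x^{(0)})\|_2 - t\big)\sin\theta$ on $t \in [0, \|P_{w_1}(x^{(0)})\|_2]$ (with the understanding that for $t$ beyond $\|P_{w_1}(x^{(0)})\|_2$ the agent has already passed $h_2$, so that range is never better), I would differentiate: $g'(t) = t/\sqrt{d_{w_1}(x^{(0)})^2 + t^2} - \sin\theta$. Setting $g'(t) = 0$ gives $t/\sqrt{d_{w_1}(x^{(0)})^2 + t^2} = \sin\theta$, whose solution is $t^\star = d_{w_1}(x^{(0)}) \tan\theta$ (using $\sin\theta/\sqrt{1-\sin^2\theta} = \tan\theta$, up to the absolute-value bookkeeping for $\theta > \pi/2$).

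The case split then falls out by comparing the unconstrained minimizer $t^\star = d_{w_1}(x^{(0)})|\tan\theta|$ with the endpoint $\|P_{w_1}(x^{(0)})\|_2$. If $|\tan\theta| > \|P_{w_1}(x^{(0)})\|_2 / d_{w_1}(x^{(0)})$, then $t^\star$ exceeds the feasible endpoint, so by convexity the minimum on $[0,\|P_{w_1}(x^{(0)})\|_2]$ is at $t = \|P_{w_1}(x^{(0)})\|_2$, i.e. $x^{(1)} = \vec 0$, hence $x^{(2)} = \vec 0$ too, with cost $\|x^{(0)}\|_2$ — matching Case 1. Otherwise $t^\star$ is interior; substituting back, $x^{(1)} = P_{w_1}(x^{(0)}) + d_{w_1}(x^{(0)})|\tan\theta|\,u$, and since $u$ points from $P_{w_1}(x^{(0)})$ toward $\vec 0$ we can rewrite $u = -P_{w_1}(x^{(0)})/\|P_{w_1}(x^{(0)})\|_2$, giving the stated $x^{(1)} = \big(1 - \tfrac{d_{w_1}(x^{(0)})}{\|P_{w_1}(x^{(0)})\|_2}|\tan\theta|\big) P_{w_1}(x^{(0)})$; plugging $t^\star$ into $g$ and simplifying $\sqrt{d_{w_1}^2 + d_{w_1}^2\tan^2\theta} = d_{w_1}/|\cos\theta|$ yields cost $d_{w_1}(x^{(0)})|\cos\theta| + \|P_{w_1}(x^{(0)})\|_2 \sin\theta$ after combining terms — matching Case 2.

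The main obstacle I anticipate is not the calculus but the geometric reductions that justify the one-dimensional reformulation: carefully arguing (i) that $x^{(1)}$ may be taken on the $h_1$-boundary rather than its interior, (ii) that the relevant search range for $t$ is exactly $[0,\|P_{w_1}(x^{(0)})\|_2]$ and moving "past" $\vec 0$ along the line is never beneficial, and (iii) that the slope of $t \mapsto d_{w_2}(x^{(1)})$ is genuinely $-\sin\theta$ with the correct sign conventions when $\theta$ is obtuse (so that the $|\tan\theta|$ and $|\cos\theta|$ absolute values are the right bookkeeping). The hypothesis $h_2(P_{w_1}(x^{(0)})) = 0$ is what guarantees $P_{w_1}(x^{(0)})$ lies on the "wrong side" of $h_2$ so that $d_{w_2}$ is strictly positive and decreasing as $t$ increases toward $\vec 0$; I would make sure to use it explicitly. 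Once these are pinned down, the rest is the convex single-variable minimization sketched above.
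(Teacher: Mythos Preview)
Your proposal is correct and follows essentially the same approach as the paper: reduce to the one-dimensional problem $g(t)=\sqrt{d_{w_1}(x^{(0)})^2+t^2}+(\|P_{w_1}(x^{(0)})\|_2-t)\sin\theta$ on $[0,\|P_{w_1}(x^{(0)})\|_2]$, solve $g'(t)=0$ to get $t^\star=d_{w_1}(x^{(0)})|\tan\theta|$, and split cases by whether $t^\star$ falls inside the interval. The paper justifies the one-dimensional reformulation by appeal to a figure, whereas you flag the boundary/range/sign reductions (i)--(iii) as items to verify explicitly; that extra care is appropriate, since your first informal justification for taking $x^{(1)}$ on the $h_1$-boundary is not quite an argument as stated and should be replaced by the convexity reasoning you allude to.
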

\begin{proof}
\begin{figure}
    \centering
    \includegraphics{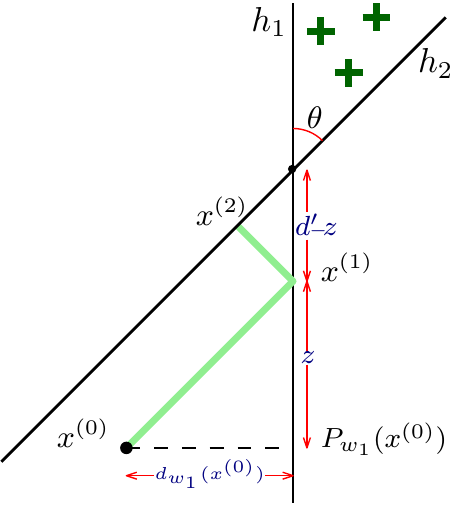}
    \caption{Illustration of the reduction from the optimization problem in Equation~\ref{eq:bestresponse} to the one in Equation~\ref{eq:reduction}.
    }
    \label{fig:zigzag-example}
\end{figure}
Given classifiers $h_1$ and $h_2$, the best response of an agent $x^{(0)}$ is a solution to the following optimization problem, as noted in Section~\ref{sec: algo}:
\begin{align*}
&\manipseq{ x^{(0)} ,\{h_1,h_2\}}
= \min_{x^{(1)}, x^{(2)}} \left\{ \Vert x^{(0)} - x^{(1)} \Vert_2 + \Vert x^{(1)} - x^{(2)} \Vert_2 : w_1^\top x^{(1)} \ge 0, w_2^\top x^{(2)} \ge 0 \right\}
\end{align*}
First, we remark that given any $x^{(1)}$, the optimal choice of $x^{(2)}$ is the orthogonal projection of $x^{(1)}$ on classifier $f_2$. Therefore, the best response can be written as:
\begin{equation}
\label{eq:bestresponse}
\manipseq{ x^{(0)} ,\{h_1,h_2\}} = \min_{x^{(1)} \in \mathbb{R}^2} \left\{ \Vert x^{(0)} - x^{(1)} \Vert_2 + d_{w_2} \left(x^{(1)}\right) : w_1^\top z \ge 0 \right\}
\end{equation}
To simplify notations, we will denote $x \triangleq x^{(0)}$. Under the assumptions of the theorem (more specifically, $h_1 (x) = 0$ and $h_2 (P_{w_1} (x)) = 0$), Equation~\eqref{eq:bestresponse} can be rewritten as an optimization over a one-dimensional variable:
\begin{equation}\label{eq:reduction}
\min_{0 \le z \le d'_{w_1} (x)} \left\{ g(z) \triangleq \sqrt{d_{w_1}^2 (x) + z^2} + (d'_{w_1} (x) - z) \sin \theta \right\}
\end{equation}
where $d'_{w_1} (x) \triangleq \Vert P_{w_1} (x)\Vert_2$ -- see Figure~\ref{fig:zigzag-example} for a graphical justification of this rewriting. 
Note that $g(z)$ achieves its minimum either at the boundaries or at the point where $g'(z) = 0$. Therefore, we have that the minimum is one of the following:
\begin{align*}
& z = 0 \Longrightarrow  g(z) = d_{w_1} (x) + d'_{w_1} (x) \sin \theta \\
&z = d'_{w_1} (x) \Longrightarrow  g(z) = \sqrt{d_{w_1}^2 (x) + d^{'2}_{w_1} (x)} = \Vert x\Vert_2 \\
& z = d_{w_1} (x) | \tan \theta| \Longrightarrow g(z) = d_{w_1} (x) | \cos \theta | + d'_{w_1} (x) \sin \theta \ (g'(z) = 0)
\end{align*}
If $d_{w_1} (x) | \tan \theta| \le d'_{w_1} (x)$, then an application of Cauchy-Schwarz inequality implies that $z=d_{w_1} (x) | \tan \theta|$ is the minimzer. Therefore, if $|\tan \theta| > d'_{w_1} (x) / d_{w_1} (x)$, the minimizer is $z^\star =  d'_{w_1} (x)$, meaning $x^{(2)} = x^{(1)} = \Vec{0}$, and that
\[
\manipseq{ x ,\{h_1,h_2\}} = \Vert x\Vert_2
\]
and if $|\tan \theta| \le d'_{w_1} (x) / d_{w_1} (x)$, the minimizer is $z^\star =  d_{w_1} (x) | \tan \theta|$ which implies
\[
x^{(1)}  = \left( 1 - \frac{d_{w_1} (x^{(0)})}{\Vert P_{w_1} (x^{(0)}) \Vert_2} |\tan \theta| \right) P_{w_1} (x^{(0)})
\] 
and $x^{(2)} = P_{w_2} (x^{(1)})$, and that
\[
\manipseq{ x ,\{h_1,h_2\}} = d_{w_1} (x) | \cos \theta | + d'_{w_1} (x) \sin \theta
\]
Therefore, putting the two cases together,
\begin{align*}
& \manipseq{ x ,\{h_1,h_2\}} = 
\begin{cases}
\Vert x\Vert_2 & \text{if } |\tan \theta| > d'_{w_1} (x) / d_{w_1} (x)\\
d_{w_1} (x) | \cos \theta | + d'_{w_1} (x) \sin \theta & \text{if } |\tan \theta| \le d'_{w_1} (x) / d_{w_1} (x)
\end{cases}
\end{align*}
\end{proof}
\begin{figure}[t]
    \centering
    \includegraphics{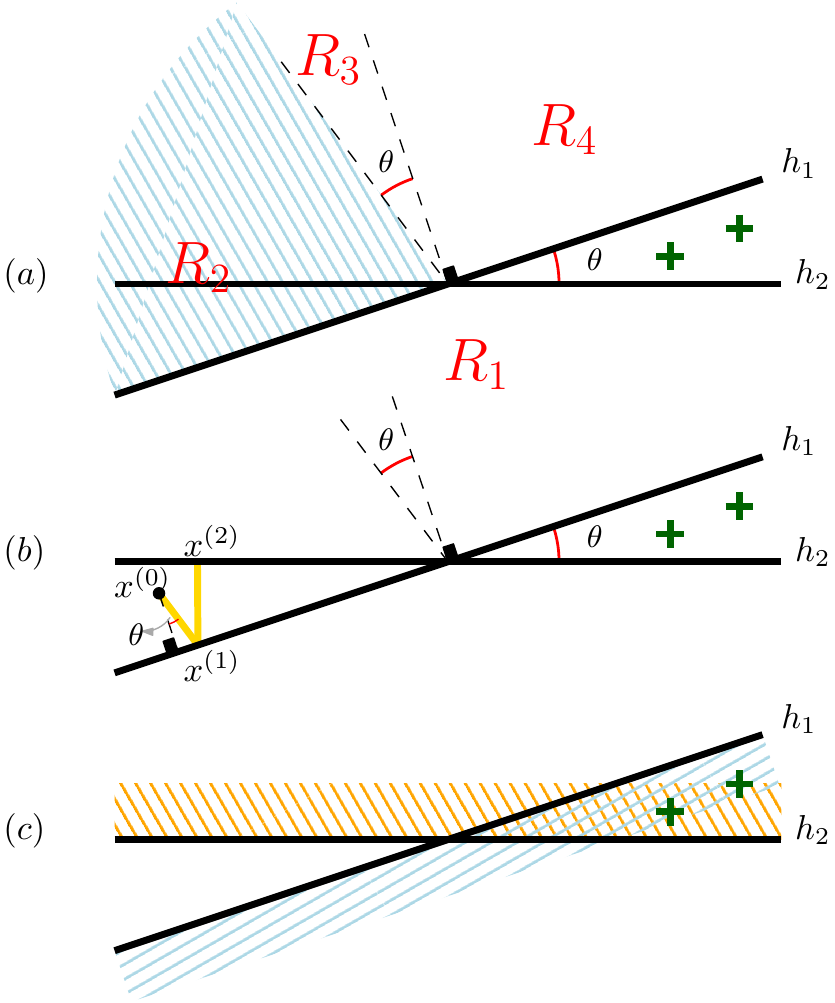}
    \caption{(a) Different cases for how agents best respond: agents in $R_1$ stay at their location to pass the first test and project onto $h_2$ to pass the second. Agents in $R_2$ deploy a zig-zag strategy. Agents in $R_3$ move to the intersection of $h_1$ and $h_2$. Agents in $R_4$ project onto $h_1$. (b) Geometric characterization of the zig-zag strategy: the line passing through $x^{(0)}$ and $x^{(1)}$ has angle $\theta$ with the line perpendicular to $h_1$. (c) This figure highlights the positive regions of $h_1$, $h_2$, and their intersection.}
    \label{fig:zigzag-region}
\end{figure}

First, note that once the first feature modification has happened and an agent has passed classifier $h_1$ and is at $x^{(1)}$, the theorem states that an agent picks $x^{(2)}$ to simply be the orthogonal projection onto the positive region of $h_2$. This is because the cost for going from $x^{(1)}$ to $x^{(2)}$ is simply the $l_2$ distance between them, in which case picking $x^{(2)}$ to be the orthogonal projection of $x^{(1)}$ on $h_2$ minimizes that distance. The main contribution and challenge of Theorem~\ref{thm: 2d_charac} are therefore to understand how to set $x^{(1)}$ and what is the minimum amount of effort that an agent expands to do so.

Now let's examine different cases in Theorem~\ref{thm: 2d_charac}. Note that we assumed $h_1 (x^{(0)}) = 0$ and $h_2 (P_{w_1} (x^{(0)})) = 0$, i.e. that an agent is not in the positive region for the first test and $P_{w_1} (x^{(0)})$ is not in the positive region for the second test, because otherwise, the solution is trivial. In fact, if $h_1 (x^{(0)}) = 1$, then the solution is simply staying at $x^{(0)}$ for the first test and then projecting orthogonally onto the positive region of $h_2$ to pass the second test:
\begin{align*}
& x^{(1)} = x^{(0)}, \ x^{(2)} = P_{w_2} (x^{(1)}) \\
& \manipseq{ x^{(0)} ,\{h_1,h_2\}} = d_{w_2} (x^{(0)})
\end{align*}
This corresponds to region $R_1$ of agents in Figure~\ref{fig:zigzag-region}. If $h_1 (x^{(0)}) = 0$, but $h_2 (P_{w_1} (x^{(0)})) = 1$, then the best response solution is simply the orthogonal projection onto the positive region of $h_1$:
\begin{align*}
&x^{(2)} = x^{(1)} = P_{w_1} (x^{(0)}) \\
&\manipseq{ x^{(0)} ,\{h_1,h_2\}} = d_{w_1} (x^{(0)})
\end{align*}
This corresponds to region $R_4$ of agents in Figure~\ref{fig:zigzag-region}. Additionally, the first case in the closed-form solutions in Theorem~\ref{thm: 2d_charac} corresponds to the region of the space where agents prefer to travel directly to the intersection of the two classifiers than deploying a zig-zag strategy: this corresponds to region $R_3$ in Figure~\ref{fig:zigzag-region}. The second case corresponds to the region where agents do find that a zig-zag strategy is less costly and gives the algebraic characterization of the optimal zig-zag strategy. This region for an agent is denoted by $R_2$ in Figure~\ref{fig:zigzag-region}. Also, as shown by Figure~\ref{fig:zigzag-region}.b, the zig-zag strategy of agents in $R_2$ has the following geometric characterization: pick $x^{(1)}$ on $h_1$ such that the line passing through $x^{(0)}$ and $x^{(1)}$ has angle $\theta$ with the line perpendicular to $h_1$.

Given a budget $\tau$, agents who can manipulate with a cost of at most $\tau$ to pass the two tests in the sequential setting, i.e. $\{x^{(0)}: \manipseq{ x^{(0)} ,\{h_1,h_2\}} \le \tau \} $ is highlighted in Figure~\ref{fig:2d-manipulation}.b.

We conclude this section by showing that if $\theta \ge \pi / 2$, then agents incur the same cost in the sequential setting as they would under the conjunction setting. In other words, agents can deploy the strategy that they would use if they had to pass the two tests simultaneously. 

\begin{restatable}{theorem}{nozigzag}\label{thm: no-zigzag}
    If $\pi/2 \le \theta < \pi$, then for every agent $x^{(0)}$ there exists optimal strategies $x^{(1)}$ and $x^{(2)}$ s.t. $x^{(1)} = x^{(2)}$, i.e.,
    \[
    \manipseq{ x^{(0)} ,\{h_1,h_2\}} = \manipconj{ x^{(0)} ,\{h_1,h_2\}}.
    \]
\end{restatable}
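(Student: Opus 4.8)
The plan is to deduce the equality from a single inequality: I would show that $\pi/2 \le \theta < \pi$ implies $\manipseq{x^{(0)},\{h_1,h_2\}} \ge \manipconj{x^{(0)},\{h_1,h_2\}}$. Combined with Claim~\ref{claim:conjBoundSeq}, which gives the opposite inequality unconditionally, this forces $\manipseq{x^{(0)},\{h_1,h_2\}} = \manipconj{x^{(0)},\{h_1,h_2\}}$, and then taking any minimizer $z^\star$ of the conjunction program and setting $x^{(1)} = x^{(2)} = z^\star$ produces a sequential strategy that is feasible ($h_1(z^\star) = h_2(z^\star) = 1$) and has cost $c(x^{(0)},z^\star) + c(z^\star,z^\star) = \manipconj{x^{(0)},\{h_1,h_2\}} = \manipseq{x^{(0)},\{h_1,h_2\}}$, hence optimal; this is exactly the asserted optimal strategy with $x^{(1)} = x^{(2)}$.

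The crux is that $\pi/2 \le \theta < \pi$ is equivalent to $w_1^\top w_2 = -\cos\theta \ge 0$. With this in hand I would prove the inequality as follows. Consider any feasible sequential path, i.e.\ $x^{(1)}$ with $h_1(x^{(1)}) = 1$ and $x^{(2)}$ with $h_2(x^{(2)}) = 1$; since under $\ell_2$ cost (Assumption~\ref{asst: l2_cost}) the cheapest second leg among points satisfying $h_2$ is the orthogonal projection, it suffices to treat $x^{(2)} = P_{w_2}(x^{(1)})$. If $w_2^\top x^{(1)} \ge 0$ then $x^{(2)} = x^{(1)}$ already lies in both positive regions and we take $z := x^{(1)}$. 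Otherwise $x^{(2)} = x^{(1)} - (w_2^\top x^{(1)}) w_2$, and since $w_2^\top x^{(1)} < 0$ while $w_1^\top w_2 \ge 0$,
\[
w_1^\top x^{(2)} = w_1^\top x^{(1)} - (w_2^\top x^{(1)})(w_1^\top w_2) \ge w_1^\top x^{(1)} \ge 0,
\]
so $x^{(2)}$ satisfies both $h_1$ and $h_2$ and we take $z := x^{(2)}$. In either case $z$ is feasible for the conjunction program and, by the triangle inequality, $c(x^{(0)},z) \le c(x^{(0)},x^{(1)}) + c(x^{(1)},x^{(2)})$. Taking the infimum over feasible sequential paths gives $\manipconj{x^{(0)},\{h_1,h_2\}} \le \manipseq{x^{(0)},\{h_1,h_2\}}$.

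I do not expect a genuine obstacle here: the whole argument rests on the elementary geometric fact that when the two positive half-planes meet at an angle of at least $\pi/2$ (equivalently $w_1^\top w_2 \ge 0$), the foot of the perpendicular from any point of $H_1 = \{h_1 = 1\}$ onto the boundary of $H_2 = \{h_2 = 1\}$ still lies in $H_1$, so the second leg of a sequential manipulation never sacrifices having passed $h_1$. The only care needed is sign bookkeeping, since for $\theta \in (\pi/2,\pi)$ one has $\cos\theta < 0$ and hence $w_1^\top w_2 > 0$. As an alternative I could instead read the result off Theorem~\ref{thm: 2d_charac}: in regions $R_1, R_4$ and in Case~1 the optimal path already has $x^{(1)} = x^{(2)}$ or collapses to a projection lying in the intersection, while in Case~2 a short trigonometric identity — using $\cos\theta < 0$ — shows the zig-zag cost $d_{w_1}(x^{(0)})|\cos\theta| + \|P_{w_1}(x^{(0)})\|_2 \sin\theta$ equals $d_{w_2}(x^{(0)}) = \manipconj{x^{(0)},\{h_1,h_2\}}$; but the half-space argument above is shorter and avoids the case analysis.
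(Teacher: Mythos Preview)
Your argument is correct and is essentially the same as the paper's: both show that when $w_1^\top w_2 = -\cos\theta \ge 0$, the projection $x^{(2)} = P_{w_2}(x^{(1)})$ satisfies $w_1^\top x^{(2)} = w_1^\top x^{(1)} - (w_2^\top x^{(1)})(w_1^\top w_2) \ge 0$, so $x^{(2)}$ lies in both positive regions and the triangle inequality collapses the two legs to a single conjunctive move. The only cosmetic difference is that you quantify over all feasible sequential paths and then take an infimum, whereas the paper starts directly from an optimal pair; the key computation and the use of Claim~\ref{claim:conjBoundSeq} are identical.
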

\begin{proof}
    Let $(x^{(1)}, x^{(2)} = P_{w_2} (x^{(1)}))$ be an optimal strategy of the agent in the sequential setting. Suppose $x^{(1)} \neq x^{(2)}$. We have that
    \begin{align*}
    w_1^\top x^{(2)} &= w_1^\top \left( x^{(1)} - (w_2^\top x^{(1)}) w_2 \right) \\
    &= w_1^\top x^{(1)} - (w_2^\top x^{(1)}) (w_1^\top w_2)
    \end{align*}
    But note that $w_1^\top x^{(1)} \ge 0$ because $x^{(1)}$ passes the first classifier by definition, $w_2^\top x^{(1)} \le 0$ because $x^{(1)} \neq x^{(2)}$, and $w_1^\top w_2 \ge 0$ because $\pi/2 \le \theta < \pi$. Therefore, $w_1^\top x^{(2)} \ge 0$ which implies $h_1 (x^{(2)}) = 1$. However, if $h_1 (x^{(2)}) = 1$, then the following manipulation: $y^{(0)} = x^{(0)}$ and $y^{(1)} = y^{(2)} = x^{(2)}$ passes both tests and that its cost is: $\Vert x^{(2)} - x^{(0)} \Vert_2 \le \Vert x^{(2)} - x^{(1)} \Vert_2 + \Vert x^{(1)} - x^{(0)} \Vert_2$ by the triangle inequality. Given the optimality of $(x^{(1)}, x^{(2)})$, we conclude that $(y^{(1)}, y^{(2)})$ is another optimal strategy that the agent can deploy.
\end{proof}

\subsection{Monotonicity}
\label{subsec-monotone}
We now consider a monotonicity property that excludes the possibility of a zig-zag strategy arising. 
A similar property
is noted in 
\citep{socialcost18}.

\begin{definition}[Feature Monotone Classifiers]
Classifier $h_i:\mathbb{R}^d \rightarrow \{0,1\}$ is  \textit{monotone} if for every  individual $x$ that is classified as positive by $h_i$, any feature-wise increase in the features of $x$ results in a positive classification by $h_i$. 
Formally, 
\[
\forall x\in \mathbb{R}^d: h_i(x)=1 \Rightarrow h_i(x+\alpha)=1\quad \forall \alpha\in (\mathbb{R}_{\geq 0})^d.
\]
\end{definition}

Note that this monotonicity property may not hold in some classification problems. For example, most mortgage loans in the US require a good credit score. A common way of improving one's credit score is by getting a credit card and having monthly statements with a balance  greater than zero but not too close to the total credit limit (and paying them on time).

\begin{restatable}{theorem}{NEthm}\label{thm:ne_thm}
 Let $h_1,\ldots,h_k$ be a sequence of monotone classifiers, and let the initial  feature vector $x^{(0)}$ be such that $h_i(x^{(0)})=0$ for every $i\in[k]$. Assume the cost function can be written as $c(x,\hat{x}) = \Vert \hat{x} - x \Vert$ for some norm $\Vert . \Vert$. Then, we have that
 \[
 \manipseq{ x^{(0)} ,\{h_1,\ldots, h_k\}} = \manipconj{ x^{(0)} ,\{h_1,\ldots, h_k\}}.
 \]
\end{restatable}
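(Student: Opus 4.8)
The plan is to prove the two inequalities separately. One direction, $\manipseq{x^{(0)},\{h_1,\ldots,h_k\}} \le \manipconj{x^{(0)},\{h_1,\ldots,h_k\}}$, is immediate from Claim~\ref{claim:conjBoundSeq}, which already holds for an arbitrary cost function. So the entire content of the theorem is the reverse inequality: under monotonicity, no sequential (zig-zag) path can beat the best single move into the conjunction of the positive regions.

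For the reverse direction I would take an \emph{arbitrary} feasible sequential path $x^{(0)} \to x^{(1)} \to \cdots \to x^{(k)}$, i.e.\ one with $h_i(x^{(i)}) = 1$ for every $i$, of total cost $C = \sum_{i=1}^k \|x^{(i)} - x^{(i-1)}\|$, and build from it a single point $z$ that satisfies all classifiers simultaneously and has $\|z - x^{(0)}\| \le C$; taking the infimum over all feasible paths then gives $\manipconj{x^{(0)},\{h_1,\ldots,h_k\}} \le \manipseq{x^{(0)},\{h_1,\ldots,h_k\}}$. The candidate is the coordinate-wise maximum $z_j \triangleq \max\{x^{(0)}_j, x^{(1)}_j, \ldots, x^{(k)}_j\}$. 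Monotonicity of the classifiers is exactly what makes $z$ feasible: since $z - x^{(i)} \in (\mathbb{R}_{\ge 0})^d$ for each $i$, the defining property of monotone classifiers applied with base point $x^{(i)}$ and increment $z - x^{(i)}$ turns $h_i(x^{(i)}) = 1$ into $h_i(z) = 1$, so $z$ passes the conjunction.

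It then remains to show the cost bound $\|z - x^{(0)}\| \le C$. I would telescope through the running coordinate-wise maxima $u^{(i)}_j \triangleq \max\{x^{(0)}_j, \ldots, x^{(i)}_j\}$, so that $u^{(0)} = x^{(0)}$, $u^{(k)} = z$, and $z - x^{(0)} = \sum_{i=1}^k (u^{(i)} - u^{(i-1)})$. The elementary per-coordinate estimate is $0 \le u^{(i)}_j - u^{(i-1)}_j = \max\{0,\, x^{(i)}_j - u^{(i-1)}_j\} \le \max\{0,\, x^{(i)}_j - x^{(i-1)}_j\} \le |x^{(i)}_j - x^{(i-1)}_j|$, where the middle step uses $x^{(i-1)}_j \le u^{(i-1)}_j$. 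Hence each increment $u^{(i)} - u^{(i-1)}$ is dominated entrywise in absolute value by $x^{(i)} - x^{(i-1)}$, and combining the triangle inequality with this domination gives $\|z - x^{(0)}\| \le \sum_{i=1}^k \|u^{(i)} - u^{(i-1)}\| \le \sum_{i=1}^k \|x^{(i)} - x^{(i-1)}\| = C$.

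The step I expect to be the crux is the very last inequality, $\|u^{(i)} - u^{(i-1)}\| \le \|x^{(i)} - x^{(i-1)}\|$: it requires the norm to be monotone with respect to the coordinatewise order, i.e.\ $|a_j| \le |b_j|$ for all $j$ implies $\|a\| \le \|b\|$. This holds for every $\ell_p$ norm (and more generally every absolute norm), and it is the property genuinely being exploited; a completely arbitrary norm would need it assumed or worked around. Everything else — feasibility of $z$ from the monotonicity definition, the telescoping identity, and the coordinatewise estimate — is routine bookkeeping, so the proof essentially reduces to choosing the coordinate-wise maximum and invoking these two facts. (Note the hypothesis $h_i(x^{(0)})=0$ for all $i$ is not actually needed for this argument; it just pins down the interesting regime.)
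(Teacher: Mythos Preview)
Your argument is correct and is a genuinely different, cleaner route than the paper's. The paper proceeds by induction on $k$: it applies the induction hypothesis to collapse the tail $z_1 \to \cdots \to z_k$ into a single jump to $z^*_{2,\ldots,k}(z_1)$, and then in the case $h_1(z^*_{2,\ldots,k}(z_1))=0$ it forms the coordinate-wise maximum of $z_1$ and $z^*_{2,\ldots,k}(z_1)$ to derive a contradiction. You instead take the coordinate-wise maximum of \emph{all} the $x^{(i)}$ at once and control the cost by telescoping through the running maxima $u^{(i)}$, which avoids the induction entirely and makes the role of monotonicity completely transparent. Your approach also exposes more honestly the hidden hypothesis both arguments need: the norm must be monotone in the sense $|a_j|\le|b_j|\ \forall j \Rightarrow \|a\|\le\|b\|$. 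The paper silently switches to writing $\|\cdot\|_p$ in its proof, so it is in effect using the same assumption without flagging it; your proposal is right that for a completely arbitrary norm the step can fail, and that for all $\ell_p$ (or any absolute norm) it goes through. Your closing remark that $h_i(x^{(0)})=0$ is not actually used is also accurate.
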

\begin{proof}
Let $f_{1,\ldots,k}:\mathbb{R}^d\rightarrow \{0,1\}$ denote the function that returns the conjunction of all the classifiers, i.e., $f_{1,\ldots,k}(x)=h_1(x)\land \ldots \land h_k(x)$. 

Let $z^*_{1,\ldots,k}(x^{0})$ denote the point on $f_{1,\ldots,k}$ that minimizes the cost, i.e., $$z^*_{1,\ldots,k}(x^{0})=\text{argmin}_{x^{(1)}}\Vert x^{(0)},x^{(1)}\Vert_p.$$ Note that by definition,  points on $f_{1,\ldots,k}$ are classified as positive by all classifiers $h_1,\ldots,h_k$ (i.e., $z^*_{1,\ldots,k}(x^{0})$ this is the best response for the conjunction case). 

It follows from the triangle inequality that any $x^{(1)}$ such that $h_1(x^{(1)})\land \ldots \land h_k(x^{(1)})=1$ has cost $c(x^{(0)},x^{(1)})\geq c(x^{(0)},z^*_{1,\ldots,k}(x^{0}))$.

We proceed by induction on the number of classifiers. 
For the induction base, consider $k=1$. Clearly, in this case moving to $z^*_{1,\ldots,k}(x)$ yields the best response. 

For the induction step, assume that for every initial point $x'$, and every $k-1$ monotone classifiers $h_2,\ldots,h_k$ it holds that
\[
    \Vert x'-z^*_{2,\ldots,k}(x')\Vert_p\leq \Vert x'-z_2\Vert_2+\ldots+\Vert z_{k-1}-z_{k}\Vert_p.
\]
for every $z_2,\ldots,z_k\in \mathbb{R}^d$ such that $h_i(z_i)=1$.

Adding the additional classifier in the beginning, $h_1$ and considering the initial point, $x$.
Assume by contradiction that there exists a path $x=z_0,z_1\ldots, z_k$ such that $h_i(z_i)\geq 0$ for every $i\in [k]$ and that 
\begin{align}\label{eq:monContra}
    c_{seq}^*(x,\{h_1,\ldots,h_k\}) 
    = \Vert x-z_1\Vert_p+\ldots+\Vert z_{k-1}-z_k\Vert_p 
    <\Vert x-z^*_{1,\ldots,k}(x)\Vert_p.
\end{align}
Since the path from $z_1$ to $z_k$ is a best response for $h_2,\ldots,h_k$ when the initial feature vector $z_1$, by setting $x'=z_1$ we can apply the induction step we and replace this path by $x,z_1,z^*_{2,\ldots,k}(x')$ without increasing the sum of manipulations. 
If $f_{1,\ldots,k}(z^*_{2,\ldots,k}(z_1))=1$, we have that $\Vert x- z_1\Vert_p+\Vert z_1- z^*_{2,\ldots,k}(z_1)\Vert_p\leq \Vert x-z^*_{1,\ldots,k}(x)\Vert_p$ due to the triangle inequality and the definition of $z^*_{1,\ldots,k}(x)$ and this is a contradiction to Eq.~\ref{eq:monContra}.

So assume $f_{1,\ldots,k}(z^*_{2,\ldots,k}(z_1))=0$.
Since $h_i(z^*_{2,\ldots,k}(z_1))=1$ for every $i\geq 2$ by definition, we have that $h_1(z^*_{2,\ldots,k})=0$. As $h_1(z_1)=1$, we can define $z'\in \mathbb{R}^d$ such that 
\[ 
z'[j]=\max\{{z^*_{2,\ldots,k}(z_1)[j], z_1[j]}\}, 
\]
and from monotonicity it follows that $f_{2,\ldots,k}(z')=1$.

Finally, we have that $\Vert x-z_1\Vert_p+ \Vert z_1-z'\Vert_p < \Vert x-z_1\Vert_p+ \Vert z_1-z^*_{2,\ldots,k}(z_1)\Vert_p$, which is a contradiction to the minimiality of $z^*_{2,\ldots,k}(z_1)$ and thus to the minimality of $z_2,\ldots,z_k$.
\end{proof}
Theorem~\ref{thm:ne_thm} in particular implies that under our monotonicity assumption and for a large class of reasonable cost functions, an agent has no incentive to zig-zag in the sequential case and in fact can simply follow the same strategy as in the simultaneous or conjunctive case. This insight immediately extends even when $x^{(0)}$ is positively classified by some but not all of the $h_i$'s as any best response is guaranteed to increase the feature values and thus will maintain the positive classification results of these classifiers. 

\section{Manipulation Resistant Defenses}\label{sec:defense}
Up to this point in the paper, we have focused mainly on the existence and feasibility of a zig-zag manipulation strategy from the perspective of an agent. 
We now shift gears and discuss the firm's decision space. We are interested in understanding how the firm can modify its classifiers to maintain a high level of accuracy (if possible), despite the strategic manipulations of an agent. To this end, we assume there is a joint distribution of features and labels $\mathcal{D}$ over $\mathcal{X}\times \{0,1]\}$. 
Interestingly, previous works \citep{bruckner2011stackelberg, hardt2016strategic} show hardness results for finding optimal strategic classifiers, where the objective is finding a single classifier $h$ that attains the strategic maximum accuracy.

Now, we can introduce the defender's game for a typical strategic classification problem.
\begin{align}
\begin{split}
\min_{h \in \mathcal{H}}&~~~P_{(x,y) \sim \mathcal{D}} [h(z^{*} (x)) \neq y ] \\
\text{s.t.}&~~~z^{*}(x) = \arg\max_{z}~h(z) - c(x,z) 
\end{split}
\end{align}
In our paper, $h$ is actually given by the sequential composition of classifiers in the screening process and $c(x,z)$ is the sum of manipulation costs per stage. The objective function in this optimization problem is a direct generalization of $0$-$1$ loss for normal learning problems, only complicated by the strategic behavior of an agent. 

As \citet{bruckner2011stackelberg} observe, this is a bi-level optimization problem and is NP-hard \citep{Jeroslow85} to compute, even when constraints and objectives are linear. 
Interestingly, \citet{hardt2016strategic} also show a hardness of approximation result for general metrics. Because of these past hardness results, we instead focus on a more tractable defense objective. 

\subsection{Conservative Defense}\label{sec:conservative-defense}
Here, we consider a different objective motivated by the hiring process in firms, in which avoiding false positives and not hiring unqualified candidates can  be seen as arguably more important than avoiding false negatives and not missing out on good candidates. This objective, described below, has been previously studied in the context of strategic classification, in particular in~\citep{gameimprove}. 
\begin{definition}[No False Positive Objective]
Given the manipulation budget $\tau$ and the initial linear classifiers $h_1, \cdots, h_k$, the goal of the firm is to design a modified set of linear classifiers $\tilde{h}_1, \cdots, \tilde{h}_k$ that maximize the true positive rate of the pipeline on manipulated feature vectors subject to no false positives. Recall that the ground truth is determined by the conjunction of $h_1, \cdots, h_k$ on unmanipulated feature vectors of agents.   
\end{definition}
Without loss of generality, we assume the pipeline is non-trivial: the intersection of acceptance regions of $h_1, \cdots, h_k$ is non-empty. 

We prove that, under standard assumptions on linear classifiers of the firm, a defense strategy that ``shifts" all classifiers by the manipulation budget, is the optimal strategy for the firm in both pipeline and conjunction settings. We formally define the defense strategy as follows: 
\begin{definition}[Conservative Strategy]\label{def:conservative-defense}
Given the manipulation budget $\tau$, the firm conservatively assumes that each agent has a manipulation budget of $\tau$ per test. For each test $h_i(x) = \mathbbm{1}[w_i ^\top x \geq b_i]$, the firm replaces it by a ``$\tau$-shifted" linear separator $\Tilde{h}_i(x) = \mathbbm{1}[w_i ^\top x \geq b_i + \tau])$. In this section, without loss of generality, we assume that all $w_i$'s have $\ell_2$-norm equal to one. 
\end{definition}

Our statement holds when the linear classifiers satisfy the following ``general position" type condition. \begin{definition}\label{def:gen-position}
We say a collection of linear classifiers $\mathcal{H} = \{h_1(x) = \mathbbm{1}[w_1 ^\top x \geq b_1], \cdots, h_k(x) = \mathbbm{1}[w_k ^\top x \geq b_k]\}$ with $w_1, \cdots, w_k\in \mathbb{R}^d$ are in ``general position" if for any $i\in [k]$, the intersection of $\{x | w_i^\top x = b_i\}$ and $\{x | \bigwedge_{j\in [k], j \neq i} h_j(x)=1\}$ lies in a $(d-1)$-dimensional subspace but in no $(d-2)$-dimensional subspace. We remark that in $\mathbb{R}^2$, this condition is equivalent to the standard general position assumption (i.e., no three lines meet at the same point). Moreover, this condition implies that no test in $\mathcal{H}$ is ``redundant", i.e., for every $i\in[k]$, the positive region of $\mathcal{H}$ (i.e., $\bigwedge_{h\in \mathcal{H}} \{x|h(x)=1\}$) is a proper subset of the positive region of $\mathcal{H}\setminus{h_i}$.   See Figure~\ref{fig:general-assumption} for an example in $\mathbb{R}^2$.
\end{definition} 
\begin{figure}
    \centering
    \includegraphics{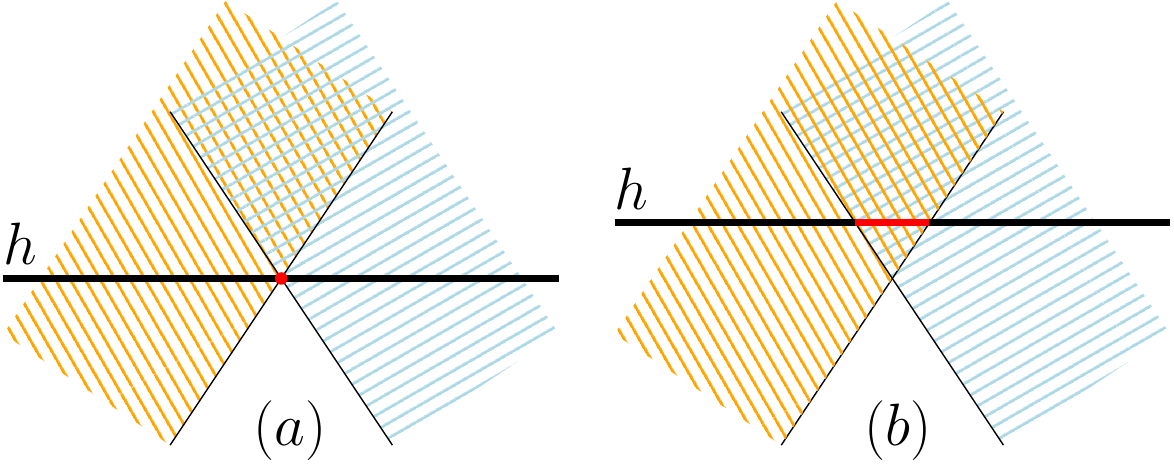}
    \caption{In ($a$), the intersection of $h$ with the positive half plane of the other two classifiers that are in blue and gray shadows is a point which is of zero dimension. This case is not in the general position and $h$ is a redundant classifier. However, in ($b$), the intersection of $h$ with the described positive regions is a line segment, a one-dimensional object. Here, $h$ is not redundant.}
    \label{fig:general-assumption}
\end{figure}
Now, we are ready to state the main result of this section.
\begin{restatable}{theorem}{thmConservativeDef} \label{thm:conservative-defense-optimality}
Consider a set of linear classifiers $\mathcal{H}= \{h_1, \cdots, h_k\}$ that are in ``general position" (as in Definition~\ref{def:gen-position}). Moreover, suppose that each agent has a manipulation budget of $\tau$. Then, in both the conjunction and sequential settings, the conservative defense is
a strategy that maximizes true positives subject to zero false positives.
\end{restatable}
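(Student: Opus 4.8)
The plan is to establish two things: (i) the conservative defense (the $\tau$-shift of every classifier) produces \emph{zero} false positives in both the conjunction and the sequential settings, and (ii) it is optimal, in the sense that \emph{any} modified pipeline $\tilde h_1, \dots, \tilde h_k$ with zero false positives accepts (post-manipulation) a subset of the agents accepted by the conservative defense. For part (i), I would reason geometrically: an agent who is truly unqualified lies outside the positive region of some $h_i$, i.e.\ $w_i^\top x^{(0)} < b_i$. Since the cost per stage is $\ell_2$ distance and the total manipulation budget usable on any single stage is at most $\tau$ (as the cumulative cost is bounded by $\tau$), the agent can move her features by at most $\tau$ in Euclidean norm before facing $\tilde h_i$; because $\|w_i\|_2 = 1$, this changes $w_i^\top x$ by at most $\tau$, so she cannot reach $w_i^\top x \ge b_i + \tau$. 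In the sequential setting one must be slightly more careful — the agent reaches stage $i$ with some intermediate vector $x^{(i-1)}$, not $x^{(0)}$ — but the key point is that the \emph{total} path length from $x^{(0)}$ to $x^{(i)}$ is at most $\tau$, so $\|x^{(i)} - x^{(0)}\|_2 \le \tau$ by the triangle inequality, and the same bound applies. Hence no unqualified agent is ever accepted: zero false positives holds.

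For part (ii), the heart of the matter is showing no zero-false-positive defense can do better. Here I would use the ``general position'' hypothesis (Definition~\ref{def:gen-position}) crucially. Fix any alternative defense $\tilde h_1, \dots, \tilde h_k$ with zero false positives. I want to show its post-manipulation acceptance region is contained in that of the conservative defense, equivalently that every \emph{qualified} agent accepted by $\tilde h$ after best-responding is also accepted by the conservative defense after best-responding. The conservative defense accepts exactly the qualified agents whose best response can reach the $\tau$-shifted intersection (in the conjunction case) or zig-zag through the $\tau$-shifted classifiers (in the sequential case); by the characterizations in Sections~\ref{sec: optstragcojn}–\ref{sec: algo} this is a clean convex/geometric region. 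The argument for containment: if some qualified agent $x^{(0)}$ were accepted by $\tilde h$ but \emph{not} by the conservative defense, then (by the zero-false-positive property of the conservative defense applied contrapositively, plus monotonic-in-budget reasoning) there is a nearby point just outside the true positive region — living in the $(d-1)$-dimensional ``face'' guaranteed by general position — that is at distance $\le \tau$ from $x^{(0)}$'s feasible manipulation set; this point would then be accepted by $\tilde h$ via the same manipulation, contradicting zero false positives of $\tilde h$. The general position condition is exactly what ensures this witnessing false-positive point exists with full $(d-1)$-dimensional freedom rather than being pinched into a lower-dimensional set where it might coincide with the boundary and evade the contradiction.

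Concretely I would structure the proof as: (1) a lemma that the conservative defense has zero false positives in both settings (short, the $\|w_i\|_2=1$ plus triangle-inequality argument above); (2) a lemma characterizing the true-positive region of the conservative defense using Theorem~\ref{thm: 2d_charac}/Program~\eqref{eq: convex_optim2} — namely a qualified agent is accepted iff she is within manipulation budget $\tau$ of the $\tau$-shifted positive region (resp.\ can sequentially pass the $\tau$-shifted pipeline); (3) the optimality argument by contradiction as sketched, invoking general position to produce the witnessing point. Steps (1) and (2) are largely bookkeeping on top of results already in the paper.

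The main obstacle I expect is Step (3), and specifically the sequential case: I need to argue that if $\tilde h$ (a \emph{sequence} of modified classifiers, which need not be simple shifts) accepts some qualified agent that the conservative defense rejects, then $\tilde h$ also accepts some unqualified agent. The subtlety is that $\tilde h$ can be an arbitrary sequence — it could be ``tilted'' rather than shifted, and its zig-zag acceptance region could have a complicated shape — so I cannot just compare thresholds pointwise. The right move is probably: zero false positives for $\tilde h$ forces each $\tilde h_i$'s accept region, intersected with the feasible reach set from the true-negative region, to avoid all truly-unqualified points; combined with general position this pins down that $\tilde h_i$ must be \emph{at least} as restrictive as the $\tau$-shift of $h_i$ along the relevant face, which then caps the true positives it can let through. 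Making the phrase ``at least as restrictive along the relevant face'' precise — and ruling out the possibility that $\tilde h$ trades restrictiveness in one region for permissiveness in another in a way that nets more true positives without creating a false positive — is where the general position hypothesis has to be used in full force, and is the step I would spend the most care on.
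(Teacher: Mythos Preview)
Your Part~(i) is correct and essentially identical to the paper's argument: an unqualified agent fails some $h_i$, and since $\|x^{(i)} - x^{(0)}\|_2 \le \tau$ by the triangle inequality over the manipulation path, she cannot reach $w_i^\top x \ge b_i + \tau$.

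Your Part~(ii), however, takes a noticeably more indirect route than the paper, and the specific contradiction you sketch has a gap. The paper does \emph{not} compare post-manipulation acceptance regions globally, and it does not need your Step~(2) at all. Instead it argues \emph{per classifier}: for each $i$, let $\Delta^i$ be the set of agents with $h_i(x)=0$ but $h_j(x)=1$ for all $j\neq i$. By the general-position assumption this set is genuinely $(d-1)$-dimensional (a full face of the positive polytope, not pinched into something lower-dimensional). Every agent in $\Delta^i$ is unqualified, so any zero-FP defense must reject her; but with budget $\tau$ she can translate by $\tau w_i$ before stage $i$. Since the reachable set from $\Delta^i$ is itself $(d-1)$-dimensional, the only \emph{linear} separator at distance exactly $\tau$ from it (on the positive side) is the parallel shift $\hat h_i(x)=\mathbbm{1}[w_i^\top x \ge b_i+\tau]$; any feasible $h'_i$ is therefore at least as restrictive as $\hat h_i$, and swapping $h'_i$ for $\hat h_i$ can only weakly increase true positives. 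Doing this for every $i$ gives the result directly.

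By contrast, your Step~(3) contradiction is underspecified at exactly the point you flag as hard: you assert that if a qualified $x^{(0)}$ is accepted by $\tilde h$ but not by the conservative defense, then a ``nearby'' unqualified point is accepted by $\tilde h$ ``via the same manipulation.'' But if $x^{(0)}$'s manipulation already exhausts the full budget $\tau$, perturbing the start point can push the path cost above $\tau$, so ``the same manipulation'' need not be feasible for the witness. You would need a separate argument to produce a witness with slack, and it is not clear how to do that without essentially rediscovering the per-classifier face argument. The paper's approach sidesteps this entirely: because it reasons about each $h'_i$ against the full $(d-1)$-dimensional family $\Delta^i$ rather than about one hypothetical $x^{(0)}$, the linearity of $h'_i$ does the work, and no budget-slack or witness-perturbation reasoning is needed.
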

\begin{proof}
First, we prove that conservative defense achieves zero false positive in both cases. To show this, by Claim~\ref{claim:conjBoundSeq}, it suffices to show it for the sequential setting only. Consider an agent $x$ who initially (i.e., before manipulation) is not in the positive region of conjunctions of $h_1, \cdots h_k$; i.e., $\Pi_{j\in [k]}h_j(x^{(0)})=0$. Hence, there exists a classifier $h_i$ such that $w^\top_i x^{(0)}< b_i$. Now, let $x^{(i)} : x^{(0)} + \epsilon_i$ denote the (manipulated) location of $x$ right before stage $i$. Since the total manipulation budget of $x$ is $\tau$, $w_i^\top x^{(i)} \le w_i^\top x^{(0)} + w_i^\top \epsilon_i < b_i + \tau$ (the choice of $\varepsilon_i$ that maximizes $w_i^\top \epsilon_i$ is $\epsilon_i = \tau w_i$, and $w_i^\top (\tau w_i) = \tau$ since $\Vert w_i \Vert_2 = 1$). Hence, $\tilde{h}(x^{(i)}) = 0$ and agent $x$ cannot pass the modified pipeline $\Tilde{h}_1, \cdots, \Tilde{h}_k$. 

Next, consider test $i$ and let $\Delta^i$ denote the subspace of points (i.e.,~agents) in the intersection of $\{x | h_i(x)=0\}$ and $ \bigwedge_{j\in [k], j\neq i} \{x| h_j(x)=1\}$. By the general position assumption, $\Delta^i$ is a $(d-1)$-dimensional subspace and is a subset of the $(d-1)$-dimensional hyperplane corresponding to $w^\top_i x = b_i$. Then, there exists only a unique linear separator which is at distance exactly $\tau$ from $\Delta^i$ (and is in the positive side of $h_i$); $\hat{h}_i(x) := \mathbbm{1}[w^\top_i x \ge b_i + \tau]$. Given that any defense strategy with zero false positive has to classify an agent in $\Delta^i$ as negative, it is straightforward to verify that any ``feasible" modified linear separator $h'_i$ (i.e., achieving zero false positive) results in true positive rate less than or equal to the one replaces $h'_i$ with $\hat{h}_i$.
\end{proof}

Note that while the conservative defense strategy has the maximum possible true positive subject to zero false positive in both simultaneous and sequential settings, by Claim~\ref{claim:conjBoundSeq}, the conservative defense achieves a higher true positive rate in the sequential setting compared to the simultaneous case. Informally, from the firm's point of view, {\em under manipulation, the sequential setting is a more efficient screening process}.

\section{Discussion}
We have initiated the study of \textit{Strategic Screening}, combining screening problems with strategic classification. 
This is a natural and wide-spread problem both in automated and semi-automated decision making. 
We believe these examples and our convex program  can aid in the design and monitoring of these screening processes. 

Substantial open questions remain regarding fairness implications of the defender's solution and exactly how susceptible real world pipelines are to zig-zagging. Some of the works cited in the related work section consider fairness considerations in the space of strategic manipulation, stemming either from unequal abilities to manipulate~\citep{socialcost18,hu2019disparate} or unequal access to information about the classifiers~\citep{bechavod2022information} across different groups. We do not consider these connections in our work, but these considerations are of significant interest and a natural direction for further research, especially due to the importance of making fair decisions in high-stake, life altering contexts. We finish with a few interesting examples for this.

Disparities might arise both in the conjunction and in the sequential setting, with or without defense. 
Consider the classifiers presented in Example \ref{exp:zigzag} and an instance in which candidates belong to two groups, $G^1$ and $G^2$ with initial feature vector distributed identically and characterized by different total manipulation budgets,  $\sqrt{2}=\tau^2>\tau^1=5/4$. 
The narrative of the fairness disparities in the conjunction case is a simple generalization of the single classifiers case (e.g., \citep{hardt2016strategic})- If the distribution is such that a significant fraction of individuals (from both groups) starts at a feature vector that is classified by both classifiers as $0$ and that requires $\sqrt{2}$ manipulation cost to reach their intersection--- only the individuals form $G_2$ will be able to manipulate. 
For the sequential case, consider a distribution with a large enough fraction of individuals starting at $(0,0)$. Example~\ref{exp:zigzag} demonstrates that only individuals from $G_2$ will have sufficient budget to manipulate (using the zig-zag strategy). 
If the firm applies the conservative defense, individuals from $G_1$ that should have been classified as positive might not have sufficient  budget to manipulate their way to acceptance,  which in turn implies  higher false negative rates. 
This indicates, similarly to prior results in strategic classification (e.g.,~\citep{hu2019disparate}), how the  members of the advantaged group are more  easily admitted or hired. 

\section*{Acknowledgements}
The authors are very grateful to Avrim Blum and Saba Ahmadi for  helpful comments on an earlier draft and discussion of related work in the literature.

\bibliographystyle{abbrvnat}
\bibliography{strategic-pipeline}

\end{document}